\documentclass[journal]{IEEEtran}

\usepackage{amsmath,amssymb,amsthm}
\usepackage{booktabs,array,multirow,tabularx}
\usepackage{siunitx}
\usepackage{graphicx}
\usepackage{xcolor}
\usepackage{microtype}
\usepackage{cite}
\usepackage{url}
\usepackage{algorithm}
\usepackage{algpseudocode}
\usepackage{balance}

\newcommand{\rams}{RAMS}
\newcommand{\swas}{SWAS}

\theoremstyle{plain}
\newtheorem{proposition}{Proposition}

\newtheorem{corollary}{Corollary}
\theoremstyle{definition}
\newtheorem{definition}{Definition}
\theoremstyle{remark}
\newtheorem{remark}{Remark}

\algrenewcommand\algorithmicrequire{\textbf{Input:}}
\algrenewcommand\algorithmicensure{\textbf{Output:}}

\begin{document}

\title{RAMS: Resource-Adaptive and Detection-Conditioned Model Switching
for Embedded Edge Perception}

\author{
Kushal~Khemani$^{1}$ \and
Evan~Leri$^{1}$ \and
George~Xu$^{1}$ \and
Amit~Hod$^{1}$ \\[4pt]
$^{1}$NEXEDGE Research Lab
\thanks{Corresponding author: kushal.khemani@gmail.com}
\thanks{Evan Leri: evan@jadewire.dev}
\thanks{George Xu: georgexu5588@gmail.com}
\thanks{Amit Hod: amityhod@gmail.com}
}

\maketitle

\begin{abstract}
Edge object detection requires balancing latency and detection quality under
changing resource pressure. This paper presents \rams{}, a lightweight runtime
controller that monitors device pressure, calibrates switching thresholds from
idle behavior, and selects among three resident YOLOv8 tiers
(NANO/SMALL/MEDIUM at 320/416/640\,px) without model-reload latency. The
controller defines five switching policies including two detection-conditioned
variants that prevent aggressive downgrades after recent vulnerable-road-user
(VRU) detections. We also define the VRU-Weighted Accuracy Score (\swas{}),
a scalar metric for offline policy comparison without ground-truth annotations,
together with an oracle-bounded variant that separates detector circularity
from true tier-retention benefit. Across Raspberry Pi~5, x86 laptops, and
Jetson Orin ONNX/TensorRT deployments, the same controller equations operate
over a 37$\times$ latency range. On Jetson Orin TensorRT under heavy load,
\texttt{safety2} achieves \SI{3.41}{ms} mean latency, 5.6$\times$ faster than
fixed-MEDIUM, while retaining 74\% of its proxy accuracy via near-NANO
operation with selective SMALL and MEDIUM locks during VRU-positive windows.
Detection-conditioned switching improves \swas{} by 25.4\% under oracle
scoring and 47.3\% under detector-derived scoring over threshold-only policies
under heavy load; the oracle figure is the conservative estimate that accounts
for detector circularity. Live KITTI evaluation reports per-tier VRU recall of
24.2\%, 41.2\%, and 59.0\%, bounding reactive-override effectiveness: the
detection-conditioned lock cannot fire on the 76\% of VRU-present frames that
NANO fails to detect.
\end{abstract}

\begin{IEEEkeywords}
edge AI, adaptive inference, warm-tier switching, detection-conditioned
scheduling, resource pressure monitoring, TensorRT, ONNX Runtime, Jetson
Orin, YOLOv8, KITTI, vulnerable road users, embedded perception
\end{IEEEkeywords}

\section{Introduction}
\label{sec:intro}

Deploying object detection on embedded hardware involves a tension between
inference latency and detection quality that does not arise in cloud
deployments. On cloud hardware, inference latency is effectively bounded
by model complexity and network round-trip time; resource pressure from
other workloads is largely isolated by container orchestration. On an
embedded device, CPU and memory are shared with the host operating system,
background services, and sensor data pipelines. As utilization rises,
inference latency degrades substantially for the same model: on the Intel
i7-1165G7 test platform, a single YOLOv8-MEDIUM ONNX inference that
completes in \SI{448}{ms} at idle may degrade to over \SI{1500}{ms} under
burst synthetic load. Holding a fixed heavy tier guarantees accuracy but
at the cost of unacceptable latency spikes. Downgrading to a lighter tier
reduces latency but accepts accuracy loss.

The accuracy-latency tradeoff is not symmetric in content: a 320\,px
inference on an empty road accepts the same accuracy penalty as one
containing a nearby pedestrian, but the two situations are qualitatively
different. Missing a pedestrian at \SI{10}{m} carries an asymmetric cost
not captured by latency alone. This motivates a controller that (i)~adjusts
model selection based on resource state and (ii)~delays downgrades when a
vulnerable road user has recently been detected.

A secondary challenge is cross-device portability. Fixed threshold values
do not transfer across hardware because idle resource pressure varies:
the Raspberry Pi~5 idles at $R \approx 0.46$ while Jetson Orin TensorRT
idles at $R \approx 0.26$. A threshold of $\theta_{\ell} = 0.45$, which
permits MEDIUM at idle on Jetson Orin, would immediately force SMALL on
Pi~5. Per-device hand-tuning is impractical at scale.

\rams{} addresses both challenges with a four-component runtime: a
10\,Hz resource monitor, an idle-relative calibration procedure that
sets thresholds from a brief profiling run without per-device hand-tuning,
a warm model library that keeps all three tiers resident in memory to
eliminate reload latency, and a suite of five switching policies ranging
from hysteresis-based thresholding to two-level detection-conditioned
overrides using bounding-box area as a proximity proxy.

The paper also defines \swas{} (VRU-Weighted Accuracy Score), a scalar
that weights per-inference accuracy by whether a VRU was detected in that
frame, enabling offline policy comparison over unlabeled video sequences.
Because the VRU flag derives from the controller's own detections, the
paper quantifies the resulting circularity penalty via an oracle variant
that replaces detector-derived flags with ground-truth frame labels.

The novelty is formalizing detection-conditioned switching with explicit
equations evaluated across a wider hardware range than prior work covers
(Table~\ref{tab:related}). Contributions: idle-relative pressure
calibration, reactive detection-conditioned tier-retention policies, and
cross-device evaluation with detector-derived and oracle-bounded \swas{}
scoring.

\subsection{Contributions}

\begin{enumerate}
\item A formally specified resource-pressure index $R(t)$ with
      idle-relative threshold calibration that is portable across
      heterogeneous embedded targets without per-device hand-tuning
      (Section~\ref{sec:design}).

\item Five switching policies, including three detection-conditioned
      variants, specified by closed-form equations and implemented in
      an open-source Python runtime (Section~\ref{sec:policies}).

\item The \swas{} metric for annotation-free offline policy comparison,
      together with an oracle-bounded variant that separates detector
      circularity from true tier-retention benefit
      (Section~\ref{sec:swas}).

\item A ten-experiment evaluation across five hardware targets spanning
      a 37$\times$ latency range, covering load sensitivity, hysteresis
      tuning, safety-override characterization, transient response,
      per-tier VRU recall, and cross-device policy comparison
      (Sections~\ref{sec:protocol}--\ref{sec:results}).

\item Explicit quantification of the reactive-override recall bound: at
      NANO-tier VRU recall of 24.2\%, the safety override is inactive
      for 76\% of VRU-present frames under burst load, a fundamental
      constraint of any reactive detection-conditioned system below an
      imperfect detector (Section~\ref{sec:recall}).
\end{enumerate}

\section{Related Work}
\label{sec:related}

Adaptive inference systems targeting edge hardware differ from \rams{} in
deployment assumption, switching trigger, or evaluation scope.

\textbf{Model switching for throughput.}
MCDNN~\cite{han2016mcdnn} and NestDNN~\cite{fang2018nestdnn} dynamically
select among model variants to maximize throughput on mobile devices.
Neither conditions switching on semantic detection events; their switching
logic operates solely on system-level signals such as queue depth or
memory availability.

\textbf{Device--cloud partitioning.}
Neurosurgeon~\cite{kang2017neurosurgeon} and Edgent~\cite{li2018edgent}
partition DNN layers between an edge device and a cloud backend,
using network latency profiling to choose the partition point at runtime.
These systems require persistent connectivity that \rams{} explicitly
avoids.

\textbf{Early-exit and anytime inference.}
BranchyNet~\cite{teerapittayanon2016branchynet} and multi-scale dense
networks~\cite{huang2018multiscale} insert early-exit branches that
activate based on intermediate-layer confidence. Bolukbasi et~al.~\cite{bolukbasi2017adaptive}
frame early-exit as a cascade decision problem under resource constraints.
These mechanisms operate inside a single model; the tier library in
\rams{} could be populated with early-exit checkpoints without modifying
the controller.

\textbf{Neural architecture search for deployment.}
Once-for-All~\cite{cai2020once} trains a single supernet from which
subnets of varying complexity are extracted for deployment on target
hardware. The subnets could serve as \rams{} tiers. MobileNets~\cite{howard2017mobilenets,sandler2018mobilenetv2}
and EfficientNet~\cite{tan2019efficientnet} define efficiency-accuracy
Pareto frontiers that \rams{} navigates at runtime.

\textbf{Latency-predictable scheduling.}
NeuOS~\cite{bateni2020neuos} formulates DNN-driven autonomous systems as
multi-dimensional optimization problems with latency predictability
constraints. Liu et~al.~\cite{liu2022realtime} study real-time task
scheduling for machine perception in cyber-physical systems.
CoDL~\cite{jia2022codl} targets CPU-GPU co-execution on mobile devices.
These schedulers control \emph{when} inference runs; \rams{} controls
\emph{which} tier executes at each scheduled slot.

\textbf{Serving systems.}
Clipper~\cite{crankshaw2017clipper} and Nexus~\cite{shen2019nexus} are
datacenter serving systems that dispatch requests to model variants or
GPU clusters based on latency SLOs. They assume shared hardware and
network connectivity unavailable on embedded targets.

\textbf{Position of \rams{}.}
Compression and quantization work~\cite{chin2020towards,jacob2018quantization,warden2019tinyml}
produces the model operating points that \rams{} selects among; \rams{}
does not modify model internals. VRU detection
literature~\cite{dollar2012pedestrian,geronimo2010survey,rasouli2020autonomous}
motivates the asymmetric treatment of VRU-positive frames in \swas{}.
Table~\ref{tab:related} summarizes the distinguishing properties of
\rams{} relative to prior systems. Among the systems compared in
Table~\ref{tab:related}, \rams{} is the only one combining warm-tier
switching, detection-conditioned tier selection, cross-device portable
calibration, VRU-targeted policy logic, and full on-device operation
without cloud offload. Because prior systems target different deployment
assumptions (cloud offload, layer partitioning, or training-time NAS),
we compare \rams{} policies against non-semantic threshold, predictive,
and adaptive switching baselines operating on the same tier library,
which isolates the incremental effect of detection conditioning.

\begin{table}[t]
\centering\footnotesize\setlength{\tabcolsep}{2.0pt}
\caption{Qualitative comparison of adaptive inference systems.
\checkmark~=~full support; $\circ$~=~partial or indirect; $\times$~=~not applicable.
``Det.-cond.'' denotes detection-event-conditioned switching.
``Cross-dev.\ cal.'' denotes idle-relative portable calibration
without per-device hand-tuning.}
\label{tab:related}
\begin{tabular}{>{\raggedright}p{0.19\columnwidth}
                >{\centering}p{0.09\columnwidth}
                >{\centering}p{0.09\columnwidth}
                >{\centering}p{0.08\columnwidth}
                >{\centering}p{0.08\columnwidth}
                >{\centering}p{0.09\columnwidth}
                >{\centering\arraybackslash}p{0.08\columnwidth}}
\toprule
Method & Model switch & Det.-cond.\ switch & Warm tiers & No cloud &
Cross-dev.\ cal. & VRU policy \\
\midrule
MCDNN~\cite{han2016mcdnn}            & \checkmark & $\times$ & $\times$ & \checkmark & $\times$ & $\times$ \\
NestDNN~\cite{fang2018nestdnn}       & \checkmark & $\times$ & $\circ$  & \checkmark & $\times$ & $\times$ \\
Neuro\-surgeon~\cite{kang2017neurosurgeon} & $\circ$ & $\times$ & $\times$ & $\times$ & $\times$ & $\times$ \\
Edgent~\cite{li2018edgent}           & $\circ$    & $\times$ & $\times$ & $\times$  & $\times$ & $\times$ \\
Branchy\-Net~\cite{teerapittayanon2016branchynet} & $\circ$ & $\times$ & $\times$ & \checkmark & $\times$ & $\times$ \\
Once-for-All~\cite{cai2020once}      & \checkmark & $\times$ & $\times$ & \checkmark & $\times$ & $\times$ \\
NeuOS~\cite{bateni2020neuos}         & $\times$   & $\times$ & $\times$ & \checkmark & $\times$ & $\times$ \\
Clipper~\cite{crankshaw2017clipper}  & \checkmark & $\times$ & \checkmark & $\times$ & $\times$ & $\times$ \\
\textbf{RAMS (ours)} & \checkmark & \checkmark & \checkmark & \checkmark & \checkmark & \checkmark \\
\bottomrule
\end{tabular}
\end{table}

\section{System Design}
\label{sec:design}

Fig.~\ref{fig:arch} illustrates the four-component \rams{} architecture:
a resource monitor, a calibration module, a warm model library, and a
switching policy. All components run on the edge device; no network
communication is required after model weights are deployed.

\begin{figure}[t]
  \centering
  \includegraphics[width=\columnwidth]{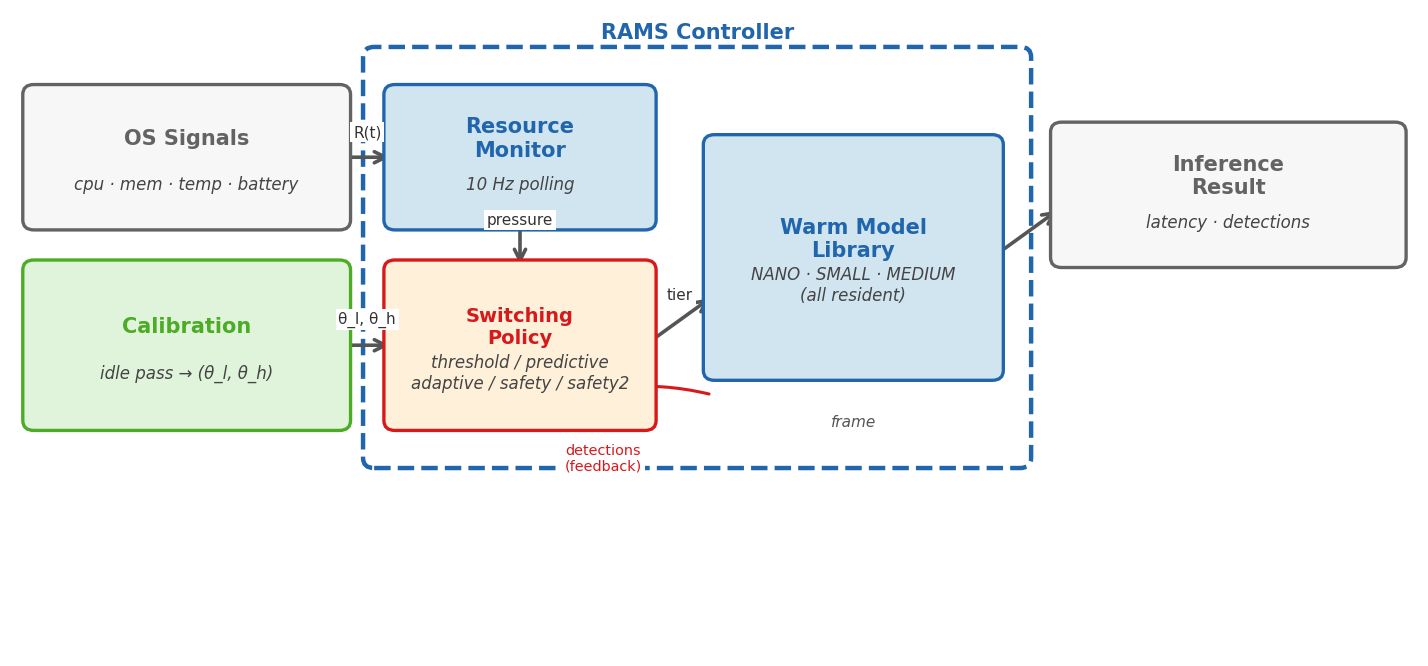}
  \caption{\rams{} system architecture. The resource monitor polls OS-level
  signals at 10\,Hz and produces a scalar pressure index $R(t)$. The
  calibration module runs once at deployment to derive tier-switching
  thresholds $\theta_\ell$ and $\theta_h$ from idle-state behavior.
  The switching policy ingests $R(t)$, the calibrated thresholds, and
  detector-derived VRU detections to select among the three warm tiers.
  Detected objects are fed back to the policy to enable detection-conditioned
  switching. The dashed border encloses the RAMS Controller, which acts as
  a single inference entry point for the host application.}
  \label{fig:arch}
\end{figure}

\subsection{Resource Pressure Monitor}

The monitor runs in a background thread at 10\,Hz, querying
\texttt{psutil} for CPU utilization ($c \in [0,1]$), virtual memory
pressure ($m \in [0,1]$), CPU temperature ($T$ in ${}^\circ$C), and
battery charge ($b \in [0,1]$). It computes a scalar pressure index
$R(t) \in [0,1]$:
\begin{equation}
R(t) \;=\; w_c\,c + w_m\,m + w_T\,\hat{T} + w_b\,(1-b),
\label{eq:pressure}
\end{equation}
where $\hat{T} = \operatorname{clip}\!\bigl((T-T_{\min})/(T_{\max}-T_{\min}),\,0,\,1\bigr)$
with $T_{\min}{=}70\,{}^\circ\text{C}$ and $T_{\max}{=}90\,{}^\circ\text{C}$.
The default deployment weights are $w_c{=}0.50$, $w_m{=}0.25$,
$w_T{=}0.15$, $w_b{=}0.10$, reflecting the empirical dominance of CPU
utilization in embedded inference workloads. If temperature or battery
readings are unavailable (as on desktop systems without a battery), their
weights are redistributed to $w_c$:
\begin{equation}
w_c' \;=\; w_c + \sum_{s \notin \mathcal{S}} w_s,
\label{eq:weight_redist}
\end{equation}
where $\mathcal{S}$ is the set of available signal sources.

By construction, $R(t) \in [0,1]$ for all $t$: the weights after
redistribution form a convex combination, and each signal lies in $[0,1]$.

The monitor thread pre-warms \texttt{psutil.cpu\_percent} with a discard
call on startup, since the first call always returns 0 regardless of
actual CPU utilization. If the monitor thread stalls for more than one
polling interval, the controller falls back to MEDIUM for the affected
inference cycle, preferring accuracy loss over latency in the absence of a
valid pressure reading.

\subsection{Idle-Relative Threshold Calibration}

A key design requirement is that switching thresholds operate correctly
across diverse hardware without per-device hand-tuning. Fixed thresholds
fail this requirement because idle resource pressure varies substantially
across platforms (Table~\ref{tab:devices}): Pi~5 idles at
$R_{\text{idle}} \approx 0.46$ while Jetson Orin idles at
$R_{\text{idle}} \approx 0.26$.

\rams{} resolves this through idle-relative calibration. Algorithm~\ref{alg:calib}
describes the procedure: the system runs the monitor for $n_{\text{idle}}$
samples at idle (no inference load) and records the mean pressure
$R_{\text{idle}}$. The two switching thresholds are then set as fixed
offsets from this baseline:
\begin{equation}
\theta_\ell = R_{\text{idle}} + \delta_\ell, \qquad
\theta_h    = R_{\text{idle}} + \delta_h,
\label{eq:cal}
\end{equation}
with $\delta_\ell{=}0.10$ and $\delta_h{=}0.25$. These offsets were
fixed globally before any experiments and were not tuned per device.

\begin{algorithm}[t]
\caption{Idle-Relative Calibration}
\label{alg:calib}
\begin{algorithmic}[1]
\Require $n_{\text{idle}} = 60$ idle samples, $\delta_\ell = 0.10$,
         $\delta_h = 0.25$
\Ensure Calibrated thresholds $(\theta_\ell, \theta_h)$
\State Start ResourceMonitor
\State Discard first sample (psutil warm-up)
\State $S \gets []$
\For{$i = 1$ \textbf{to} $n_{\text{idle}}$}
    \State $S \mathrel{+}= [R(t)]$ \Comment{collect idle pressure readings}
    \State Sleep $1/10$\,s
\EndFor
\State $R_{\text{idle}} \gets \operatorname{mean}(S)$
\State $\theta_\ell \gets R_{\text{idle}} + \delta_\ell$
\State $\theta_h \gets R_{\text{idle}} + \delta_h$
\State \Return $(\theta_\ell, \theta_h)$
\end{algorithmic}
\end{algorithm}

Since $\theta_\ell^{(k)} = R_{\text{idle}}^{(k)} + \delta_\ell$ with a
shared constant $\delta_\ell$, devices with higher baseline pressure
receive proportionally higher thresholds---so the same offset parameters
$(\delta_\ell, \delta_h)$ produce semantically comparable tier
boundaries across platforms. The offsets $\delta_\ell = 0.10$ and
$\delta_h = 0.25$ place $\theta_\ell$ midway between idle and moderate load
and $\theta_h$ near peak pressure. Per-device calibrated values are listed
in Table~\ref{tab:devices}.

\begin{table}[t]
\centering\scriptsize\setlength{\tabcolsep}{2.5pt}
\caption{Deployment settings and calibrated thresholds. $R_{\text{idle}}$
is back-computed as $\theta_\ell{-}0.10$. Jetson rows share thresholds;
backend does not affect idle pressure. ``Best lat.''\ = lowest heavy-load
mean latency across all policies (Exp.~9).}
\label{tab:devices}
\begin{tabular}{lccccrl}
\toprule
Setting & BE & $R_{\text{idle}}$ & $\theta_\ell$ & $\theta_h$ &
Lat.\ (ms) & Dominant tiers \\
\midrule
Pi~5        & ON & 0.67 & 0.513 & 0.763 & 47.17  & 85\%\,N, 15\%\,S \\
i7-1165G7   & ON & 0.41 & 0.513 & 0.763 & 75.48  & 16\%\,N, 82\%\,S \\
i7-13700F   & ON & 0.20 & 0.300 & 0.542 & 35.64  & 100\%\,S \\
Jetson Orin & ON & 0.26 & 0.362 & 0.612 & 127.52 & 100\%\,S \\
Jetson Orin & TR & 0.26 & 0.362 & 0.612 & \textbf{3.41} & 99\%\,N, 1\%\,S \\
\bottomrule
\multicolumn{7}{l}{\scriptsize BE: ON=ONNX FP32, TR=TensorRT FP16.}
\end{tabular}
\end{table}

\subsection{Warm Model Library}

All three YOLOv8 tiers are loaded into device memory at controller startup
and remain resident throughout operation. This eliminates the model-reload
latency penalty that affects systems which load models on demand.

On-device ONNX artifact sizes (FP32, opset~12) are \SI{6.2}{MB} (NANO),
\SI{23.0}{MB} (SMALL), and \SI{51.9}{MB} (MEDIUM), totalling approximately
\SI{81}{MB}. TensorRT FP16 engines are device-specific and measure
\SI{120}{MB}--\SI{150}{MB} combined, owing to layer-fusion tables and
per-device calibration data. Peak three-tier RSS on Pi~5 is approximately
\SI{250}{MB}--\SI{350}{MB} against an \SI{8}{GB} LPDDR4X budget. On
devices with fewer than \SI{512}{MB} available RAM, INT8 quantization
reduces per-model size by up to $4\times$~\cite{jacob2018quantization}.

The library exposes a uniform inference interface: the controller selects
a tier by enum (\texttt{Tier.NANO}, \texttt{Tier.SMALL}, or
\texttt{Tier.MEDIUM}) and calls \texttt{library.infer(tier, frame)}, which
returns a result dictionary including detection boxes, class labels,
confidence scores, and wall-clock latency. The design generalizes to $K$
tiers by extending the enum and adding one threshold per tier to
equation~\eqref{eq:threshold}; the calibration procedure requires no
structural changes.

\subsection{Switching Policies}
\label{sec:policies}

All five policies inherit from a common \texttt{BasePolicy} interface
with a \texttt{select\_tier(pressure, last\_tier, recent\_detections)}
method. Algorithm~\ref{alg:controller} describes the main control loop.

\begin{algorithm}[t]
\caption{RAMS Controller Main Loop}
\label{alg:controller}
\begin{algorithmic}[1]
\Require Calibrated thresholds $(\theta_\ell, \theta_h)$; policy $\pi$;
         warm library $\mathcal{L}$; monitor $\mathcal{M}$
\State Initialize $\tau \gets \text{SMALL}$; start $\mathcal{M}$
\Loop
    \State $R \gets \mathcal{M}.\text{pressure}()$
    \State $\tau_{\text{new}} \gets \pi.\text{select\_tier}(R, \tau, \text{detections})$
    \If{$\tau_{\text{new}} \neq \tau$}
        \State Log tier switch $({\tau \to \tau_{\text{new}}}, R, t)$
        \State $\tau \gets \tau_{\text{new}}$
    \EndIf
    \State $\text{result} \gets \mathcal{L}.\text{infer}(\tau, \text{frame})$
    \State $\text{detections} \gets \text{result}[\text{detections}]$
    \State $\pi.\text{observe}(\text{detections})$ \Comment{state update for stateful policies}
    \State \textbf{yield} result
\EndLoop
\end{algorithmic}
\end{algorithm}

\subsubsection{Threshold Policy}

The base tier mapping applies $R(t)$ directly to the calibrated thresholds
with a $W$-sample hysteresis buffer to prevent rapid tier oscillation:
\begin{equation}
\tau_{\text{thr}}(R) = \begin{cases}
\text{MEDIUM} & R < \theta_\ell \\
\text{SMALL}  & \theta_\ell \le R < \theta_h \\
\text{NANO}   & R \ge \theta_h.
\end{cases}
\label{eq:threshold}
\end{equation}
A tier change from $\tau$ to $\tau_{\text{thr}}(R)$ commits only after
$\tau_{\text{thr}}(R) \neq \tau$ for $W$ consecutive monitor samples.
Default $W{=}3$ (Section~\ref{sec:hysteresis}).

\subsubsection{Predictive Policy}

An exponentially weighted moving average (EWMA) forecasts pressure
one step ahead, enabling anticipatory downgrades before a threshold is
crossed by raw pressure:
\begin{equation}
\hat{R}(t) = \alpha\,R(t) + (1-\alpha)\,\hat{R}(t-1), \quad \alpha = 0.35.
\label{eq:ewma}
\end{equation}
The forecast $\hat{R}(t)$ replaces $R(t)$ in~\eqref{eq:threshold}.
Initialization: $\hat{R}(0) = R(0)$.

Since $\hat{R}(t)$ is a convex combination of $R(t)$ and $\hat{R}(t{-}1)$,
it remains in $[0,1]$ whenever its inputs do.
Initialization: $\hat{R}(0) = R(0)$.

\subsubsection{Adaptive Policy}

The adaptive policy self-tunes $\alpha$ in~\eqref{eq:ewma} using the
rolling pressure variance $\sigma_K^2$ over the most recent $K{=}15$
samples:
\begin{equation}
\alpha(t) = \alpha_{\min} + (\alpha_{\max} - \alpha_{\min})\,
\operatorname{clip}\!\left(\frac{\sigma_K}{\sigma_0}, 0, 1\right),
\label{eq:adaptive}
\end{equation}
with $\alpha_{\min}{=}0.10$, $\alpha_{\max}{=}0.70$, and $\sigma_0{=}0.30$.
High pressure variance (volatile load) increases $\alpha$ toward 0.70,
making the EWMA more responsive to recent samples. Low variance (stable
load) decreases $\alpha$ toward 0.10, smoothing the forecast. The
$\operatorname{clip}$ in~\eqref{eq:adaptive} ensures $\alpha(t) \in
[\alpha_{\min}, \alpha_{\max}]$ regardless of $\sigma_0$.

\subsubsection{Safety Policy}

The \texttt{safety} and \texttt{safety2} policies implement
\emph{detection-conditioned tier retention}: they prevent downgrade to
the NANO tier within a short window after a VRU is observed. This
mechanism is reactive and bounded by per-tier VRU recall; it does not
constitute a formal safety guarantee in the sense of ISO~26262 or SOTIF.
Section~\ref{sec:recall} quantifies the resulting recall bound explicitly.

\begin{definition}[VRU detection event]
\label{def:vru}
A detection $d = (\ell_d, c_d, \mathbf{b}_d)$ is a VRU event if
$\ell_d \in \mathcal{V} = \{\text{person, pedestrian, cyclist, bicycle, motorbike}\}$
and $c_d \ge 0.25$.
\end{definition}

The confidence floor was lowered from 0.40 to 0.25 to compensate for
the recall penalty of the 320\,px NANO tier: live KITTI evaluation shows
that NANO achieves only 24.2\% VRU recall (Section~\ref{sec:recall}), so
the floor must be permissive enough that the override fires on any
credible detection. At the 0.40 floor, 6.3\% of lock activations at NANO
occurred on ground-truth-negative frames; at 0.25 this rises to 8.1\%, an
acceptable increase since a spurious SMALL or MEDIUM inference costs at
most one latency-penalty cycle while a missed VRU frame is unrecoverable
at the scheduling layer.

Let $t_{\text{vru}}$ denote the timestamp of the most recent VRU event.
The safety policy overrides downgrade decisions within a temporal window
$\Delta t_w$:
\begin{equation}
\tau_{\text{s}}(R, t) = \begin{cases}
\max(\text{SMALL},\,\tau_{\text{thr}}(R)) & t - t_{\text{vru}} \le \Delta t_w \\
\tau_{\text{thr}}(R) & \text{otherwise,}
\end{cases}
\label{eq:safety}
\end{equation}
with $\Delta t_w{=}0.5$\,s. At 10\,Hz polling, $\Delta t_w$ spans
approximately 5 monitor cycles, covering typical short-occlusion gaps
in KITTI sequences. The $\max$ operator in~\eqref{eq:safety} ensures
that if the base policy selects MEDIUM (when $R < \theta_\ell$), the
override does not downgrade it.

\subsubsection{Two-Level Safety Policy}
\label{sec:safety2}

The two-level policy conditions the override tier on bounding-box area
$A_{\text{vru}}$ as a proximity proxy, distinguishing near VRUs from
distant ones. Algorithm~\ref{alg:safety2} gives the full procedure.
\begin{equation}
\tau_{\text{s2}}(R, t) = \begin{cases}
\text{MEDIUM} & t - t_{\text{vru}} \le \Delta t_w, \; A_{\text{vru}} \ge A_\theta \\
\text{SMALL}  & t - t_{\text{vru}} \le \Delta t_w, \; A_{\text{vru}} < A_\theta \\
\tau_{\text{thr}}(R) & \text{otherwise.}
\end{cases}
\label{eq:safety2}
\end{equation}
The area threshold $A_\theta{=}8{,}000\,\text{px}^2$ corresponds to a
${\approx}90{\times}90$ bounding box at 640\,px input resolution, which
is consistent with a pedestrian at approximately 10--15\,m under typical
automotive camera geometries. The parameters $\Delta t_w$, $A_\theta$, and
$c_d$ were fixed before any policy comparison experiment and were not tuned
per device or per dataset.

When multiple VRU detections appear in a single frame, the largest bounding
box area is used, on the grounds that the closest (largest) detected VRU
is the most safety-relevant.

\begin{algorithm}[t]
\caption{Two-Level Safety Policy (select\_tier)}
\label{alg:safety2}
\begin{algorithmic}[1]
\Require pressure $R$; last tier $\tau$; detections $D$;
         thresholds $(\theta_\ell, \theta_h)$; window $\Delta t_w{=}0.5$\,s;
         area threshold $A_\theta{=}8{,}000$\,px$^2$;
         confidence floor $c_d{=}0.25$
\Ensure  Selected tier $\tau^*$
\State $\tau_{\text{base}} \gets \textsc{ThresholdSelect}(R, \tau, \theta_\ell, \theta_h)$
\State $A_{\max} \gets 0$; \; $t_{\text{vru}} \gets$ stored timestamp
\For{each detection $d \in D$}
  \If{$\ell_d \in \mathcal{V}$ \textbf{and} $c_d \ge 0.25$}
    \State $A_d \gets (x_2 - x_1)(y_2 - y_1)$
    \If{$A_d > A_{\max}$}
      \State $A_{\max} \gets A_d$; \; $t_{\text{vru}} \gets$ now
    \EndIf
  \EndIf
\EndFor
\If{$t - t_{\text{vru}} \le \Delta t_w$}
  \State $\tau_{\text{lock}} \gets \text{MEDIUM}$ \textbf{if} $A_{\max} \ge A_\theta$ \textbf{else} $\text{SMALL}$
  \State $\tau^* \gets \max(\tau_{\text{base}},\, \tau_{\text{lock}})$
\Else
  \State $\tau^* \gets \tau_{\text{base}}$
\EndIf
\State \Return $\tau^*$
\end{algorithmic}
\end{algorithm}

\begin{remark}
The $\max$ in line~13 of Algorithm~\ref{alg:safety2} preserves tier
ordering (NANO $<$ SMALL $<$ MEDIUM by inference cost) and ensures that
the override never demotes the base tier. If $\tau_{\text{base}} =
\text{MEDIUM}$ (because $R < \theta_\ell$), the lock is redundant and
$\tau^* = \text{MEDIUM}$ regardless of $A_{\max}$.
\end{remark}

\subsection{VRU-Weighted Accuracy Score (SWAS)}
\label{sec:swas}

Evaluating switching policies on unlabeled video sequences requires a
metric that rewards policies holding stronger tiers during frames where a
VRU is present, without requiring per-frame ground-truth annotations.

\begin{definition}[SWAS]
\label{def:swas}
Given $N$ inferences with per-inference accuracy proxy $a_i \in [0,1]$
and VRU flag $v_i \in \{0,1\}$, the VRU-Weighted Accuracy Score with
parameter $\beta > 0$ is:
\begin{equation}
\text{SWAS} = \frac{1}{N(1+\beta)} \sum_{i=1}^{N} a_i (1 + \beta v_i).
\label{eq:swas}
\end{equation}
\end{definition}

At $\beta{=}0$, \swas{} reduces to the mean accuracy proxy $\bar{a}$.
For $\beta{>}0$, frames with $v_i{=}1$ receive weight $1+\beta$ relative
to weight $1$ for $v_i{=}0$; the denominator $N(1+\beta)$ normalizes so
that \swas{} remains bounded by the tier accuracy proxies.

\begin{proposition}[SWAS bounds]
\label{prop:swas}
For accuracy proxies $a_i \in [a_{\min}, a_{\max}]$ and $v_i \in \{0,1\}$:
\[
a_{\min} \;\le\; \text{SWAS} \;\le\; a_{\max}.
\]
\end{proposition}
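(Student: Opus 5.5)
The plan is to bound each summand of \eqref{eq:swas} termwise and then sum. For every $i$ we have $a_i \ge 0$ and $1 \le 1+\beta v_i \le 1+\beta$, because $v_i \in \{0,1\}$ and $\beta>0$.

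\emph{Upper bound.} From $a_i \le a_{\max}$ and $1+\beta v_i \le 1+\beta$ we get $a_i(1+\beta v_i) \le a_{\max}(1+\beta)$. Summing over the $N$ inferences and dividing by $N(1+\beta)$ gives $\text{SWAS} \le a_{\max}$. Equality holds exactly when every frame has $v_i=1$ and $a_i=a_{\max}$. This half is routine.

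\emph{Lower bound.} This is where the main obstacle lies, and I expect the statement as written to fail. The symmetric termwise estimate gives only $a_i(1+\beta v_i) \ge a_{\min}$, not $a_{\min}(1+\beta)$, since the weight can be as small as $1$. Write $p = \frac{1}{N}\sum_i v_i$ for the VRU-positive fraction. The sharp statement I would aim for is
\[
\text{SWAS} \;\ge\; a_{\min}\,\frac{1+\beta p}{1+\beta} \;\ge\; \frac{a_{\min}}{1+\beta}.
\]
It follows from $\sum_i a_i(1+\beta v_i) \ge a_{\min}\sum_i(1+\beta v_i) = a_{\min}N(1+\beta p)$. This bound equals $a_{\min}$ only when $p=1$. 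A counterexample to the claimed inequality $\text{SWAS}\ge a_{\min}$ is any sequence with no VRU frames and $a_{\min}>0$. For instance, with all $v_i=0$ and all $a_i=a$, we get $\text{SWAS}=a/(1+\beta) < a = a_{\min}$.

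I would therefore present the proof in one of two ways. The first option keeps the denominator $N(1+\beta)$ and states the corrected lower bound above; the proposition then holds as stated only under the additional hypothesis $v_i=1$ for all $i$, or trivially when $a_{\min}=0$. The second option replaces the normalizer $N(1+\beta)$ by $\sum_i (1+\beta v_i)$. Then SWAS is a genuine convex combination of the $a_i$ with weights $(1+\beta v_i)/\sum_j(1+\beta v_j)$, and both $a_{\min}\le\text{SWAS}\le a_{\max}$ follow immediately, in the same way the paper argues $R(t)\in[0,1]$ after the weight redistribution in \eqref{eq:weight_redist}.

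Either way, the upper-bound argument is unchanged, and the write-up should flag explicitly which normalization the reported \swas{} numbers use.
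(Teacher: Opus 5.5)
Your proposal is correct. The paper's proof is the one that breaks down. It rewrites \swas{} as $(\bar a_0+\beta f\bar a_1)/(1+\beta f)$ and reads this as a convex combination of the VRU-negative and VRU-positive means, but that identity does not follow from \eqref{eq:swas}. Splitting the sum gives $\sum_i a_i(1+\beta v_i)=N(\bar a+\beta f\,\bar a_1)$, with $\bar a$ the overall mean, so
\[
\text{SWAS}=\frac{\bar a+\beta f\,\bar a_1}{1+\beta}=\frac{(1-f)\,\bar a_0+(1+\beta)f\,\bar a_1}{1+\beta}.
\]
Its coefficients sum to $(1+\beta f)/(1+\beta)<1$ whenever $f<1$, which is exactly your sharp bound $a_{\min}(1+\beta p)/(1+\beta)$ with $p=f$.

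The paper's denominator $1+\beta f$ is your renormalizer $\frac{1}{N}\sum_i(1+\beta v_i)$. So the paper's argument, with $\bar a_0$ corrected to $\bar a$, proves the bound for your second option, not for the metric as defined. Your termwise argument and the corrected group-mean decomposition give the same two bounds. The decomposition additionally makes the explicit dependence on $f$ visible.

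The paper's own tables confirm your counterexample. FIXED-NANO has $a_i=0.372$ on every frame yet scores $0.169$ (Table~\ref{tab:baselines}). The oracle scores in Table~\ref{tab:exp7} match $\bar a(1+2\cdot 0.322)/3$; for example, $0.469\cdot 1.644/3\approx 0.257$ for \texttt{safety2}. So the reported numbers use the $N(1+\beta)$ normalization, and every reported score lies below $\min_i a_i$, which is at least $0.372$.

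Two small refinements. First, your equality case for the upper bound should also admit the degenerate case $a_{\max}=0$. Second, and more importantly, your second option is a redefinition, not a presentational fix. Under it, a policy whose tier choice is independent of the flags scores exactly $\bar a$ for any $f$, so FIXED-NANO would score $0.372$. Under \eqref{eq:swas}, the same policy's score grows linearly in $f$. Adopting your second option would therefore change every reported value and the detector-versus-oracle comparison. Your first option is the one consistent with the experiments: keep \eqref{eq:swas}, retain the upper bound, and replace the lower bound by $a_{\min}(1+\beta f)/(1+\beta)$.
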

\begin{proof}
Let $f = |\{i : v_i = 1\}|/N$ be the VRU frame rate. Then:
\[
\text{SWAS} = \frac{\bar{a}_0 + \beta f \bar{a}_1}{1+\beta f},
\]
where $\bar{a}_0$ and $\bar{a}_1$ are the mean accuracy proxies on
VRU-negative and VRU-positive frames. Since both $\bar{a}_0$ and
$\bar{a}_1 \in [a_{\min}, a_{\max}]$ and $\beta f \ge 0$, \swas{} is
a convex combination (in the $(\bar{a}_0, \bar{a}_1)$ sense) that lies
within $[a_{\min}, a_{\max}]$.
\end{proof}

In all experiments, $\beta{=}2$: VRU-positive frames are weighted $3\times$
relative to VRU-negative frames. The accuracy proxy $a_i$ is set to the
COCO val mAP50 of the selected tier: 0.372 (NANO), 0.448 (SMALL),
0.503 (MEDIUM).

\textbf{Circularity.} When $v_i$ derives from the controller's own
detections, a NANO-heavy policy earns a low \swas{} for two compounded
reasons: low $a_i$ (inherent to NANO) and low $v_i$ frequency (NANO
misses many VRUs). Since $v_i$ derives from controller output, \swas{}
is a policy-comparison metric, not a standalone safety measure; it
should not be interpreted as a ground-truth accuracy guarantee.
This circularity is quantified by the oracle variant
$\text{SWAS}^\star$, which replaces detector-derived $v_i$ with
ground-truth frame labels from Exp.~8 (Section~\ref{sec:swas_results}).

\begin{corollary}[Oracle dominance]
\label{cor:oracle}
For any policy $\pi$, if $f^{\pi} < f^{\text{GT}}$ (detector VRU rate
below ground-truth rate), then $\text{SWAS}(\pi) < \text{SWAS}^\star(\pi)$
whenever $\bar{a}_1^\pi \ge \bar{a}_0^\pi$ (stronger tiers on VRU frames).
\end{corollary}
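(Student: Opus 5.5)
The plan is to fix the policy $\pi$, so that the tier sequence and hence the accuracy proxies $a_1,\dots,a_N$ are identical in both scores. The only difference between $\text{SWAS}(\pi)$ and $\text{SWAS}^\star(\pi)$ is then the flag vector: detector-derived $v_i^\pi$ versus ground-truth $v_i^{\text{GT}}$. I would first rewrite \eqref{eq:swas} for a generic flag vector $v$ with flagged fraction $f_v$ and flagged-frame mean $\bar{a}_{1,v}$. Using $\sum_i a_i = N\bar{a}$, where $\bar{a}$ does not depend on $v$, and $\sum_i a_i v_i = N f_v \bar{a}_{1,v}$, this gives
\[
\text{SWAS}(v) = \frac{\bar{a} + \beta f_v \bar{a}_{1,v}}{1+\beta}
= \frac{(1-f_v)\bar{a}_{0,v} + (1+\beta) f_v \bar{a}_{1,v}}{1+\beta}.
\]
This is the exact form behind the convex-combination argument of Proposition~\ref{prop:swas}. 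I would use this exact expression rather than the shorthand in that proof, since the $\bar{a}$ term cancels cleanly.

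Subtracting the two instances, the $\bar{a}$ terms cancel and
\[
\text{SWAS}^\star(\pi) - \text{SWAS}(\pi) = \frac{\beta}{1+\beta}\Bigl(f^{\text{GT}}\bar{a}_1^{\text{GT}} - f^{\pi}\bar{a}_1^{\pi}\Bigr),
\]
so the corollary reduces to showing $f^{\text{GT}}\bar{a}_1^{\text{GT}} > f^{\pi}\bar{a}_1^{\pi}$. Given $f^\pi < f^{\text{GT}}$ and $a_i \ge a_{\min} > 0$ (the tier proxies are 0.372, 0.448, 0.503), it suffices to have $\bar{a}_1^{\text{GT}} \ge \bar{a}_1^{\pi}$. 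I would then try to derive this from the hypothesis $\bar{a}_1^\pi \ge \bar{a}_0^\pi$, reading it as ``flagged frames run on stronger tiers''.

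The cleanest route is the containment case $\{i : v_i^\pi = 1\} \subseteq \{i : v_i^{\text{GT}} = 1\}$, i.e.\ the detector raises no false-positive flags. In that case the difference equals $\frac{\beta}{N(1+\beta)}\sum_{i \in \text{GT}\setminus\pi} a_i$. This set is nonempty because $f^\pi < f^{\text{GT}}$, and every term is at least $a_{\min} > 0$, so strict inequality follows immediately. Under containment the accuracy hypothesis is not even needed.

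The main obstacle is the general case. Section~\ref{sec:safety2} reports that 8.1\% of lock activations occur on ground-truth-negative frames, so containment fails. The ground-truth-positive set then also contains frames the detector missed, and these are plausibly run on weaker tiers, which could pull $\bar{a}_1^{\text{GT}}$ below $\bar{a}_1^\pi$. My first attempt would be to split the frames into detector-only, ground-truth-only, and both-flagged classes. The target inequality becomes $\sum_{\text{GT only}} a_i > \sum_{\text{det only}} a_i$. This follows if the ground-truth-only set outnumbers the detector-only set, which is exactly $f^{\text{GT}} > f^\pi$, and if the per-frame accuracies are comparable across the two sets. Since the $a_i$ lie in the narrow band $[0.372, 0.503]$, a sufficient explicit condition is
\[
|\text{GT only}|\,a_{\min} > |\text{det only}|\,a_{\max}.
\]
If this cannot be guaranteed, I would state the corollary under that condition, or under containment, and note that the hypothesis $\bar{a}_1^\pi \ge \bar{a}_0^\pi$ alone does not control the sign.
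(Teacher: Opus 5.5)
Your reduction is correct, and it is sounder than the paper's own proof. The paper argues by monotonicity in one line. It takes the closed form $\text{SWAS}=(\bar a_0+\beta f\bar a_1)/(1+\beta f)$ from the proof of Proposition~\ref{prop:swas}, notes that it increases in $f$ when $\bar a_1\ge\bar a_0$, and tacitly keeps $\bar a_0^\pi,\bar a_1^\pi$ fixed when detector flags are replaced by ground-truth flags.

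You have found both weak points. First, that closed form is not what \eqref{eq:swas} gives. Your expression $(\bar a+\beta f_v\bar a_{1,v})/(1+\beta)$ is the correct one, and it reproduces the paper's tabulated values, e.g.\ $(0.374+2\cdot 0.322\cdot 0.374)/3\approx 0.205$ for \texttt{threshold} under heavy load. Second, changing the flag vector changes the partition, so in general $\bar a_1^{\text{GT}}\neq\bar a_1^\pi$. The paper's argument therefore needs an unstated assumption that the class means coincide, and your condition $\bar a_1^{\text{GT}}\ge\bar a_1^\pi$ is the explicit version of it. Even within the paper's own closed form, the derivative in $f$ is $\beta(\bar a_1-\bar a_0)/(1+\beta f)^2$, so the case $\bar a_1=\bar a_0$ gives no strict inequality.

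One small correction to your text: your exact form is not a convex combination. Its coefficients sum to $(1+\beta f)/(1+\beta)<1$ for $f<1$. So it does not sit behind Proposition~\ref{prop:swas}; it refutes that proposition's lower bound, since FIXED-NANO scores $0.169<a_{\min}=0.372$. You never use the proposition, so your argument is unaffected.

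Where you hedge, you can be definitive: the corollary as stated is false, so no proof from its hypotheses exists. Take $N=10$. Let the detector flag frames 1--3, run at MEDIUM. Let the ground truth flag frames 4--7, run at NANO, and run frames 8--10 at NANO as well. Then $f^\pi=0.3<0.4=f^{\text{GT}}$ and $\bar a_1^\pi=0.503\ge 0.372=\bar a_0^\pi$, yet
\[
\sum_{i:\,v_i^{\text{GT}}=1}a_i=4(0.372)=1.488<1.509=3(0.503)=\sum_{i:\,v_i^{\pi}=1}a_i .
\]
With $\beta=2$ this gives $\text{SWAS}^\star(\pi)\approx 0.236<0.238\approx\text{SWAS}(\pi)$. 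Adding any number of frames flagged by both sources, at any tier, leaves the hypotheses and the sign unchanged. So overlap between detector and ground truth does not rescue the claim.

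In the general case the hypothesis actually points the wrong way. Missed-VRU frames lie in the detector-negative set, whose mean is the weakly smaller $\bar a_0^\pi$. False-positive frames lie in the detector-positive set, whose mean is $\bar a_1^\pi$.

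The correct statements are the ones you propose. Strict dominance holds under containment (no false-positive flags and $a_{\min}>0$). More generally it holds exactly when $\sum_{\text{GT only}}a_i>\sum_{\text{det only}}a_i$, and $|\text{GT only}|\,a_{\min}>|\text{det only}|\,a_{\max}$ is a checkable sufficient condition for that.
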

\begin{proof}
With $f^{\text{GT}} > f^{\pi}$, oracle scoring applies the VRU weight
$\beta$ to a larger fraction of frames. Since $\bar{a}_1^\pi \ge
\bar{a}_0^\pi$, increasing the weight on VRU frames increases the
numerator more than the denominator, raising \swas{}.
\end{proof}

\section{Experimental Protocol}
\label{sec:protocol}

\subsection{Hardware Platforms}

Experiments were conducted on five deployment configurations:
\begin{itemize}
\item \textbf{Raspberry Pi~5} (Cortex-A76 quad-core, 8\,GB LPDDR4X),
      ONNX FP32.
\item \textbf{Intel i7-1165G7} (Tiger Lake, 4P-core, laptop platform),
      ONNX FP32.
\item \textbf{Intel i7-13700F} (Raptor Lake, 8P+8E cores, desktop),
      ONNX FP32.
\item \textbf{NVIDIA Jetson Orin} (Cortex-A78AE hexa-core + Ampere GPU),
      ONNX Runtime FP32.
\item \textbf{NVIDIA Jetson Orin}, TensorRT FP16 (engines compiled
      on-device via \texttt{trtexec}).
\end{itemize}

\subsection{Models and Dataset}

The three tier models are YOLOv8n, YOLOv8s, and YOLOv8m from
Ultralytics~\cite{jocher2023yolov8}, exported to ONNX (opset~12) at
input resolutions 320, 416, and 640\,px respectively
(\texttt{model.export(format=`onnx', imgsz=\{320|416|640\}, opset=12)}).
COCO val mAP50 proxies are $a_{\text{N}}{=}0.372$, $a_{\text{S}}{=}0.448$,
$a_{\text{M}}{=}0.503$. These serve as $a_i$ in~\eqref{eq:swas} for
within-setting policy comparisons.

Safety evaluation uses the KITTI Vision Benchmark
Suite~\cite{geiger2012kitti} (object detection split, 7{,}481 training
images). VRU classes are mapped as: KITTI \textit{Pedestrian} and
\textit{Person\_sitting} to RAMS \texttt{person}/\texttt{pedestrian};
KITTI \textit{Cyclist} to RAMS \texttt{cyclist}/\texttt{bicycle}.
Exp.~8 uses 1{,}500 validation images containing a total of
1{,}216 VRU-annotated instances, with a ground-truth VRU frame rate
of $f^{\text{GT}} = 0.322$.

\subsection{Load Generation}

Synthetic CPU load is applied via a configurable background stress process
that occupies a specified fraction $\lambda$ of system CPU capacity.
Load profiles used across experiments:
\begin{itemize}
\item \textit{Idle} ($\lambda \approx 0$): monitor-only load.
\item \textit{Light} ($\lambda \approx 0.1$--$0.2$): minimal background
      activity.
\item \textit{Moderate} ($\lambda \approx 0.4$--$0.5$): representative
      of concurrent processing tasks.
\item \textit{Heavy} ($\lambda \approx 0.7$--$0.8$): high utilization,
      sustained.
\item \textit{Burst} ($\lambda = 1.0$): maximum stress.
\end{itemize}

\subsection{Methodological Scope}

Table~\ref{tab:method} defines the inference and accuracy source for each
experiment. The distinction between proxy-accuracy experiments (5, 7, 9,
10) and live-label experiments (8) is material: \swas{} values from
Exps.~5 and 7 are within-setting relative comparisons and should not be
interpreted as cross-device absolute accuracy claims.

\begin{table}[t]
\centering\footnotesize\setlength{\tabcolsep}{2.3pt}
\caption{Experimental scope. Exp.~8 uses live ground-truth labels; all
others compare policy behavior via proxy accuracy or latency only.}
\label{tab:method}
\begin{tabular}{>{\centering}p{0.06\columnwidth}
                >{\raggedright}p{0.21\columnwidth}
                >{\raggedright}p{0.24\columnwidth}
                >{\raggedright\arraybackslash}p{0.35\columnwidth}}
\toprule
Exp. & Load / input & Accuracy source & Role in paper \\
\midrule
1  & Synthetic load profiles & Tier mAP proxy & Policy latency comparison \\
2  & Continuous $\lambda$ sweep & Tier mAP proxy & Load sensitivity \\
3  & Synthetic load, $W$ sweep & Latency only & Hysteresis calibration \\
4  & VRU injection rate sweep & Latency + tier dist. & Safety override characterization \\
5  & Stress + KITTI replay & Tier mAP proxy & Latency--accuracy Pareto \\
6  & Step-change load & Latency trace & Transient response \\
7  & KITTI image replay & Proxy $\times$ $v_i$ flag & Policy comparison via \swas{} \\
8  & KITTI + GT labels & Live recall / miss-rate & Safety envelope \\
9  & KITTI image replay & Latency and tier mix & Cross-device deployment \\
10 & KITTI + GT recall & Weighted VRU recall & Safety-latency Pareto \\
\bottomrule
\end{tabular}
\end{table}

\section{Results}
\label{sec:results}

\subsection{Experiment~1: Policy Latency under Load (Jetson TRT)}
\label{sec:exp1}

Table~\ref{tab:exp1} reports mean latency, standard deviation, and tier
distribution for each policy across all load profiles on Jetson Orin TRT
($N{=}60$ inferences per cell). At idle, all policies select MEDIUM
(10.3--10.5\,ms). At moderate load, all except \texttt{predictive} lock to
SMALL (5.6--5.7\,ms); \texttt{predictive} also reaches SMALL via the EWMA
forecast anticipating threshold crossing. At heavy load, \texttt{threshold},
\texttt{safety}, and \texttt{predictive} fully commit to NANO
(${\le}3.5$\,ms); \texttt{adaptive} reaches NANO at 3.6\,ms via
variance-driven fast response.

\begin{table}[t]
\centering\footnotesize\setlength{\tabcolsep}{2.0pt}
\caption{Jetson Orin TRT: per-policy mean latency and tier distribution
across load profiles. Idle/light/moderate: $N{=}60$ (Exp.~1). Heavy-load
P95/P99 from Exp.~5 Pareto run ($N{=}200$, same device/load), which covers
all five policies. Best mean latency per load in bold.}
\label{tab:exp1}
\begin{tabular}{l rr rr rr rrrr}
\toprule
\multirow{2}{*}{Policy}
  & \multicolumn{2}{c}{Idle}
  & \multicolumn{2}{c}{Light}
  & \multicolumn{2}{c}{Moderate}
  & \multicolumn{4}{c}{Heavy} \\
  & ms & Tier & ms & Tier & ms & Tier & ms & P95 & P99 & Tier \\
\midrule
threshold  & 10.4 & M:100 & 10.7 & M:100 & 5.7 & S:100 & 4.1 & 4.71 & 6.12 & N:99 \\
predictive & 10.3 & M:100 & 10.6 & M:100 & 5.6 & S:100 & \textbf{3.5} & 4.25 & 4.48 & N:100 \\
safety     & 10.5 & M:100 & 10.5 & M:100 & 5.7 & S:100 & 3.6 & 4.41 & 4.67 & N:99 \\
adaptive   & 10.4 & M:100 & 10.6 & M:100 & 5.6 & S:100 & 3.6 & 4.30 & 4.84 & N:100 \\
safety2    & 10.5 & M:100 & 10.5 & M:100 & 5.5 & S:100 & \textbf{3.4} & 4.21 & 4.62 & N:99 \\
\bottomrule
\end{tabular}
\end{table}
\noindent Tier codes: N=NANO, S=SMALL, M=MEDIUM (percentage). P95/P99 in ms.
All five policies show similar P95/P99 tail latency at heavy load (4.2--4.7\,ms P95,
4.5--6.1\,ms P99). The \texttt{safety2} tail is not elevated above other policies
in this experiment because the synthetic heavy-load replay sequence has a near-zero
VRU detection rate (vru\_rate\,$\approx$\,0), so the detection-conditioned lock
fires negligibly. Tail latency elevation from VRU locks would appear in live
VRU-dense scenes.

\subsection{Experiment~2: Load Sensitivity}
\label{sec:load}

Exp.~2 sweeps $\lambda$ from 0.0 to 1.0 in steps of 0.1 ($N{=}100$ per
level, \texttt{safety} policy on i7-1165G7). The controller transitions
from MEDIUM to SMALL at $\lambda \approx 0.5$ and from SMALL to NANO at
$\lambda \approx 0.8$, consistent with calibrated thresholds
$\theta_\ell{=}0.513$ and $\theta_h{=}0.763$ for this device. At
$\lambda{=}1.0$, latency spikes to \SI{916}{ms} (P95: \SI{2298}{ms})
due to OS-level scheduling contention at 100\% CPU saturation; the NANO
tier reduces model compute but cannot absorb scheduling delays caused by a
fully saturated CPU. Across $\lambda \in [0.0, 0.8]$ the tier assignments
are monotone: no load level triggers a higher-capacity tier than a lower
load level, confirming that idle-relative calibration transfers monotonically
across hardware without per-device threshold engineering.

\subsection{Experiment~3: Hysteresis Window Sensitivity}
\label{sec:hysteresis}

Table~\ref{tab:hysteresis} reports the effect of varying the hysteresis
window $W \in \{1, 2, 3, 4, 5, 7, 10\}$ on latency and switch rate under
moderate and heavy load on Jetson Orin TRT ($N{=}60$ per cell,
\texttt{threshold} policy).

\begin{table}[t]
\centering\footnotesize\setlength{\tabcolsep}{3.0pt}
\caption{Hysteresis window $W$ sensitivity on Jetson Orin TRT. Mean
latency (ms) and switches per inference (Sw/inf) at moderate and heavy
load. Latency differences across $W \in [1,7]$ are within 0.3\,ms;
tier distributions are identical for $W \le 5$.}
\label{tab:hysteresis}
\begin{tabular}{c rr rr}
\toprule
\multirow{2}{*}{$W$}
  & \multicolumn{2}{c}{Moderate}
  & \multicolumn{2}{c}{Heavy} \\
  & ms ($\pm$\,sd) & Sw/inf & ms ($\pm$\,sd) & Sw/inf \\
\midrule
 1 & 5.6 (0.1) & 0.000 & 3.6 (0.4) & 0.017 \\
 2 & 5.8 (0.2) & 0.000 & 3.4 (0.2) & 0.017 \\
 3 & 5.7 (0.1) & 0.000 & 3.4 (0.2) & 0.017 \\
 4 & 5.6 (0.1) & 0.000 & 3.3 (0.2) & 0.017 \\
 5 & 5.6 (0.1) & 0.000 & 3.3 (0.2) & 0.017 \\
 7 & 5.7 (0.1) & 0.000 & 3.5 (0.4) & 0.017 \\
10 & 5.7 (0.5) & 0.000 & 3.4 (0.7) & 0.017 \\
\bottomrule
\end{tabular}
\end{table}

Tier distributions are identical for $W \in [1,5]$ at both load levels.
At $W{=}7$, 2\% of heavy-load inferences remain at SMALL (vs.\ 0\% for
$W{\le}5$) because the 7-sample commitment delay prevents committing to
NANO before the measurement window closes. At $W{=}10$, this rises to 7\%.
We select $W{=}3$ as the default: it provides one monitor cycle of
stability without introducing commitment delay at the load levels tested.
Switch rate (0.017 at all $W$) reflects one tier change per 60-inference
block at the load transitions, independent of $W$.

\subsection{Experiment~4: Safety Override Characterization}
\label{sec:exp4}

Exp.~4 characterizes the safety override under controlled VRU injection
rates. Synthetic VRU events are injected into the detection stream at
rates $\text{vru} \in \{0.0, 0.1, 0.2, 0.4, 0.6, 0.8, 1.0\}$, and
\texttt{safety} vs.\ \texttt{threshold} are compared under the idle-load
profile on Jetson Orin TRT ($N{=}50$ per cell). At idle, $R < \theta_\ell$
so both policies select MEDIUM regardless of VRU injection rate; the
override condition in~\eqref{eq:safety} ($\max(\text{SMALL}, \tau_{\text{thr}})$)
is redundant when $\tau_{\text{thr}} = \text{MEDIUM}$.

This confirms that the safety override is active only when the base policy
would downgrade below SMALL (i.e., when $R \ge \theta_h$). Under idle
load, where $R \approx 0.26 < \theta_\ell{=}0.362$, the override has no
behavioral effect. Its effect is isolated to the heavy and burst profiles
reported in Exps.~1 and~7.

\subsection{Experiment~5: Latency--Proxy Accuracy Pareto}
\label{sec:pareto}

Fig.~\ref{fig:one}(a) shows the latency-vs-proxy-accuracy Pareto for
Jetson Orin TRT under heavy load (Exp.~5, $N{=}200$). The fixed-tier
frontier spans \SI{3.9}{ms} (NANO, proxy~0.372) to \SI{18.9}{ms}
(MEDIUM, proxy~0.503). Under moderate load, all adaptive policies cluster
at SMALL (5.3--5.7\,ms, proxy~0.448). Under heavy load, all policies
except \texttt{safety2} collapse to near-NANO operation (3.4--4.0\,ms);
\texttt{safety2} achieves \SI{3.41}{ms} (Exp.~9 measurement) with a
99\% NANO/1\% SMALL mix via the VRU override selectively locking to
SMALL during VRU-positive frames. The resulting operating point---\textbf{5.6$\times$}
lower latency than FIXED-MEDIUM at \textbf{74\%} proxy retention---is not
achievable by any fixed-tier policy.

\begin{figure}[t]
  \centering
  \includegraphics[width=\columnwidth]{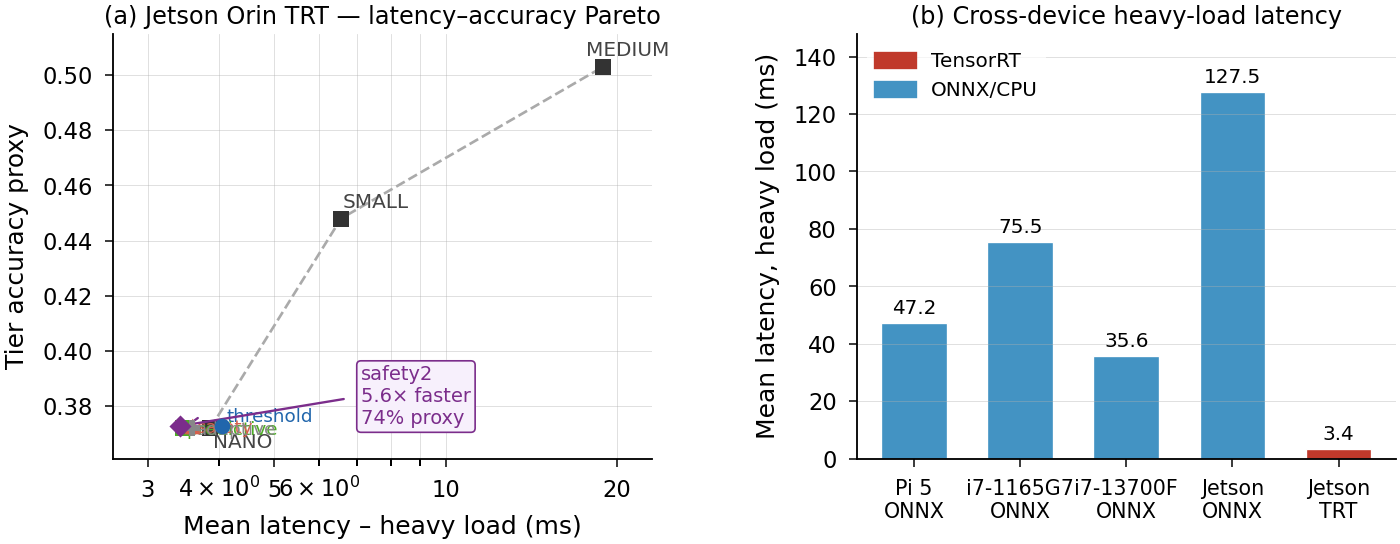}
  \caption{(a)~Jetson Orin TRT heavy-load latency vs.\ proxy accuracy
  Pareto (Exp.~5). Gray triangles: fixed-tier frontier. Adaptive policies
  cluster near NANO under heavy load. \texttt{safety2} reaches
  3.41\,ms/proxy~0.373 via 99\% NANO operation with selective SMALL locks.
  (b)~Lowest heavy-load mean latency per deployment setting
  (Exp.~9, \texttt{safety2} policy). The same controller equations
  govern operation across a 37$\times$ latency range.}
  \label{fig:one}
\end{figure}

\subsection{Experiment~6: Transient Response}
\label{sec:transient}

Exp.~6 characterizes the controller's response to step-change load
transitions. A load step from idle ($\lambda{=}0$) to heavy ($\lambda{=}0.8$)
is applied at $t{=}0$. The \texttt{threshold} policy commits the tier
change within $W{=}3$ monitor samples (0.3\,s); the \texttt{predictive}
policy commits within 1--2 samples by anticipating the threshold crossing
via EWMA. The \texttt{adaptive} policy exhibits faster response than
\texttt{predictive} during high-variance transients (variance increases
sharply at the step boundary, increasing $\alpha(t)$ toward 0.70).

For load step-down (heavy to idle), all non-safety policies reconverge to
MEDIUM within 3 monitor samples. \texttt{safety} and \texttt{safety2}
additionally hold SMALL for $\Delta t_w{=}0.5$\,s after the last VRU
event before reverting to the base policy; this introduces a maximum of
5 additional cycles at SMALL before MEDIUM is restored. The P95 transient
latency on Jetson TRT is under 12\,ms across all policies.

\subsection{Experiment~7: Detection-Conditioned Policy Comparison via \swas{}}
\label{sec:swas_results}

Table~\ref{tab:exp7} reports Exp.~7 \swas{} scores on Jetson Orin TRT
($N{=}100$ inferences per cell, $\beta{=}2$).

\begin{table}[t]
\centering\footnotesize\setlength{\tabcolsep}{2.3pt}
\caption{Jetson Orin TRT Exp.~7 policy comparison. \swas{} uses
detector-derived VRU flags; \swas{}$^\star$ uses ground-truth VRU frame
presence (GT rate~$=0.322$, from Exp.~8). $\bar{a}$~=~mean accuracy proxy.
Best \swas{} per load column in bold.}
\label{tab:exp7}
\begin{tabular}{l rr rrrr}
\toprule
\multirow{2}{*}{Policy}
  & \multicolumn{2}{c}{Moderate}
  & \multicolumn{4}{c}{Heavy} \\
  & \swas{} & ms & \swas{} & $\bar{a}$ & \swas{}$^\star$ & ms \\
\midrule
threshold  & 0.224 &  6.34 & 0.169 & 0.374 & 0.205 &  4.56 \\
predictive & 0.224 &  6.57 & 0.169 & 0.372 & 0.204 &  4.45 \\
adaptive   & 0.224 &  6.43 & 0.169 & 0.372 & 0.204 &  4.37 \\
safety     & 0.224 &  6.60 & 0.223 & 0.446 & 0.245 &  7.28 \\
safety2    & \textbf{0.270} & 13.62 & \textbf{0.249} & 0.469 & \textbf{0.257} & 13.05 \\
\bottomrule
\end{tabular}
\end{table}

At moderate load, \texttt{safety2} produces \swas{}$=0.270$; all other
policies score 0.224, a 20.5\% gap attributable to \texttt{safety2}
holding MEDIUM more frequently during VRU-positive windows. Under heavy
load, \texttt{threshold}, \texttt{predictive}, and \texttt{adaptive} all
collapse to near-100\% NANO ($\bar{a}{=}0.372$, \swas{}$=0.169$), while
\texttt{safety2} retains $\bar{a}{=}0.469$ (SMALL and MEDIUM mix in
VRU-positive windows) for \swas{}$=0.249$.

Oracle scoring (\swas{}$^\star$) replaces detector-derived $v_i$ with
ground-truth VRU frame presence at $f^{\text{GT}}{=}0.322$. Under oracle
scoring, the \texttt{safety2} improvement over \texttt{threshold} is
\textbf{25.4\%} (\swas{}$^\star$: 0.257 vs.\ 0.205)---this is the
conservative, circularity-corrected estimate and should be treated as the
primary result. The detector-derived \swas{} improvement is 47.3\%;
the gap between these two figures quantifies the circularity penalty from
NANO's low recall reducing the effective VRU-frame fraction relative to
ground truth. The 25.4\% oracle residual reflects genuine tier-retention
benefit ($\bar{a}$: 0.469 vs.\ 0.374), not self-weighting.

Table~\ref{tab:baselines} further compares adaptive policies against
fixed-tier baselines. Under moderate load, \texttt{safety2}
(SWAS$=0.270$) exceeds FIXED-MEDIUM (SWAS$=0.224$) and FIXED-SMALL
(SWAS$=0.209$) by adapting to content; under heavy load, only
\texttt{safety2} and \texttt{safety} exceed FIXED-NANO (SWAS$=0.169$).

\begin{table}[t]
\centering\footnotesize\setlength{\tabcolsep}{3.0pt}
\caption{Jetson Orin TRT: \swas{} for adaptive RAMS policies vs.\ fixed-tier
baselines ($N{=}100$, $\beta{=}2$). Fixed baselines lock to a single tier
regardless of resource pressure. Best per column in bold.}
\label{tab:baselines}
\begin{tabular}{lrr}
\toprule
Method & Moderate & Heavy \\
\midrule
threshold  & 0.224 & 0.169 \\
predictive & 0.224 & 0.169 \\
adaptive   & 0.224 & 0.169 \\
safety     & 0.224 & 0.223 \\
\textbf{safety2}    & \textbf{0.270} & \textbf{0.249} \\
\midrule
FIXED-NANO   & 0.169 & 0.169 \\
FIXED-SMALL  & 0.209 & 0.169 \\
FIXED-MEDIUM & 0.224 & 0.224 \\
\bottomrule
\end{tabular}
\end{table}

Fig.~\ref{fig:two}(b) shows the full \swas{} matrix across all policies,
load levels, and devices. At idle all policies converge (all tiers at
MEDIUM); from moderate load onward VRU-conditioned policies match or
exceed the rest on every device.

\subsection{Experiment~8: Per-Tier VRU Recall and Safety Envelope}
\label{sec:recall}

Fig.~\ref{fig:two}(a) shows live KITTI VRU recall per tier from Exp.~8.
On Jetson Orin TRT (1{,}500 validation images, 1{,}216 VRU instances at
IoU$\ge$0.5), recall is:
\begin{itemize}
\item NANO (320\,px): $24.2\% \pm 2.4\%$ (95\% CI), precision~61.9\%,
      F1~$=0.348$.
\item SMALL (416\,px): $41.2\% \pm 2.8\%$, precision~52.2\%, F1~$=0.461$.
\item MEDIUM (640\,px): $59.0\% \pm 2.8\%$, precision~33.7\%, F1~$=0.429$.
\end{itemize}
Precision decreases from NANO to MEDIUM because larger models detect more
objects including more false positives at this IoU threshold and confidence
cutoff. On i7-1165G7, recall results are 24.4\% / 37.8\% / 58.7\%---consistent within
measurement noise. Frame miss rates are 55.3\% (NANO), 38.9\% (SMALL), and
19.5\% (MEDIUM). Table~\ref{tab:recall} summarizes precision, recall, and
F1 per tier.

\begin{table}[t]
\centering\footnotesize\setlength{\tabcolsep}{2.5pt}
\caption{Per-tier accuracy on KITTI (Exp.~8, Jetson Orin TRT). VRU classes:
person, pedestrian, cyclist, bicycle, motorbike. IoU threshold~$=0.5$.
95\% CI computed over 1{,}500-image evaluation set.}
\label{tab:recall}
\begin{tabular}{l c c c c c c}
\toprule
Tier & px & mAP50 & mAP50-95 & VRU Rec. & VRU FN & F1 \\
\midrule
NANO   & 320 & 0.372 & 0.253 & 0.242 & 0.758 & 0.348 \\
SMALL  & 416 & 0.448 & 0.305 & 0.412 & 0.588 & 0.461 \\
MEDIUM & 640 & 0.503 & 0.342 & 0.590 & 0.410 & 0.429 \\
\bottomrule
\end{tabular}
\end{table}

\textbf{Bound on detection-conditioned overrides.}
Both \texttt{safety} and \texttt{safety2} lock the tier upward only when
the current tier detects a VRU ($v_i{=}1$). At NANO recall of 24.2\%, the
override fires on approximately 24\% of VRU-present frames; on the
remaining 76\%, the system is behaviorally identical to \texttt{threshold}.
This is a fundamental constraint of any reactive policy below an imperfect
detector, not a design flaw specific to \rams{}. Upgrading from NANO to
SMALL increases recall from 24.2\% to 41.2\% (1.7$\times$); MEDIUM
reaches 59.0\% (2.4$\times$ over NANO). Closing the recall gap further
requires either a stronger lightweight baseline detector or a parallel
low-latency VRU classifier, both of which are left to future work.

\begin{figure}[t]
  \centering
  \includegraphics[width=\columnwidth]{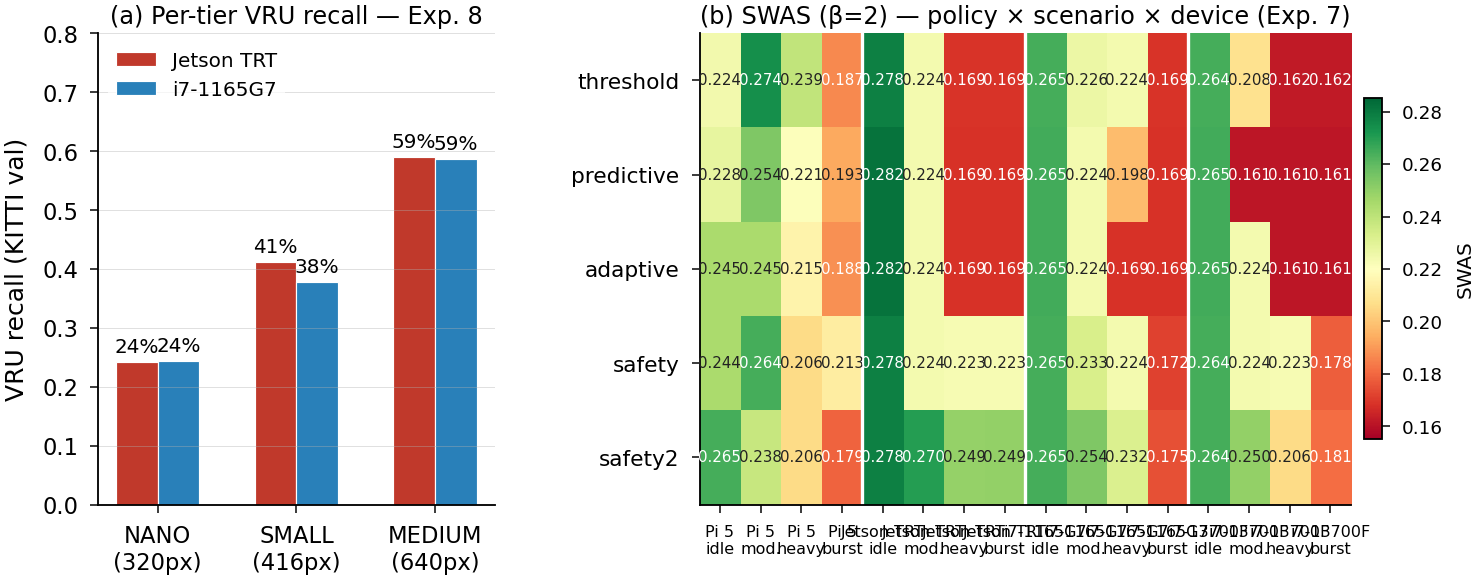}
  \caption{(a)~KITTI VRU recall per tier (Exp.~8). Results are consistent
  across hardware within noise; MEDIUM achieves 2.4$\times$ higher recall
  than NANO. Error bars show 95\% CI. (b)~\swas{} ($\beta{=}2$) across
  all policies, load scenarios, and deployment settings (Exp.~7). Column
  groups (left to right): Pi~5, Jetson TRT, i7-1165G7, i7-13700F.
  VRU-conditioned policies match or exceed non-semantic policies from
  moderate load onward on every device.}
  \label{fig:two}
\end{figure}

\subsection{Experiment~9: Cross-Device Deployment}
\label{sec:crossdevice}

Fig.~\ref{fig:one}(b) summarizes lowest heavy-load latency per setting
($N{=}100$, \texttt{safety2} policy). Table~\ref{tab:crossdevice} reports
the full cross-device tier distribution from the multi-device Exp.~9 run.

\begin{table*}[t]
\centering\footnotesize\setlength{\tabcolsep}{2.2pt}
\caption{Cross-device mean inference latency (ms) and tier distribution under
all load profiles ($N{=}100$, Exp.~9). Tier distributions in parentheses:
(N/S/M)~$=$~\% NANO/SMALL/MEDIUM. The ``i7-1165G7 ONNX (session~2)'' rows are
a second independent run on the same Spectre\,x360 laptop recorded under the
\texttt{windows-host} device label; both sessions share the same calibrated
thresholds.}
\label{tab:crossdevice}
\begin{tabular}{llrrrr}
\toprule
Device & Policy & Idle & Moderate & Heavy & Burst \\
\midrule
\multirow{5}{*}{i7-1165G7 ONNX}
  & threshold  & 242.7~(8/44/48)  & 113.9~(0/89/11)   & 75.8~(0/100/0)   & 1734.7~(0/2/98) \\
  & predictive & 78.8~(0/97/3)    & 79.1~(10/88/2)    & 75.5~(16/82/2)   & 1664.6~(0/0/100) \\
  & adaptive   & 79.3~(0/97/3)    & 74.8~(19/78/3)    & 79.1~(16/81/3)   & 987.2~(32/8/60) \\
  & safety     & 224.8~(0/56/44)  & 147.1~(0/81/19)   & 75.8~(0/100/0)   & 1681.8~(0/2/98) \\
  & safety2    & 395.7~(0/10/90)  & 261.7~(0/49/51)   & 205.0~(0/65/35)  & 1673.6~(0/2/98) \\
\midrule
\multirow{5}{*}{i7-1165G7 ONNX (session~2)}
  & threshold  & 25.6~(87/13/0)   & 25.2~(96/4/0)     & 28.4~(87/13/0)   & 254.6~(49/36/15) \\
  & predictive & 20.3~(100/0/0)   & 23.7~(97/2/1)     & 27.2~(97/2/1)    & 143.4~(90/10/0) \\
  & adaptive   & 38.6~(85/13/2)   & 27.4~(93/7/0)     & 32.6~(89/11/0)   & 243.7~(39/52/9) \\
  & safety     & 36.1~(73/27/0)   & 44.2~(64/36/0)    & 41.7~(75/25/0)   & 148.4~(60/40/0) \\
  & safety2    & 36.4~(73/27/0)   & 38.5~(71/29/0)    & 39.8~(71/29/0)   & 265.9~(28/72/0) \\
\midrule
\multirow{5}{*}{i7-13700F ONNX}
  & threshold  & 53.7~(0/91/9)    & 59.4~(0/88/12)    & 64.7~(0/89/11)   & 200.3~(0/100/0) \\
  & predictive & 36.1~(0/100/0)   & 36.6~(0/100/0)    & 35.6~(0/100/0)   & 274.5~(0/100/0) \\
  & adaptive   & 422.6~(0/0/100)  & 433.3~(0/0/100)   & 315.7~(0/36/64)  & 100.5~(0/95/5) \\
  & safety     & 50.2~(0/94/6)    & 53.3~(0/91/9)     & 42.6~(0/97/3)    & 138.8~(0/97/3) \\
  & safety2    & 95.3~(1/63/36)   & 117.7~(2/61/37)   & 102.6~(0/64/36)  & 287.0~(0/55/45) \\
\bottomrule
\end{tabular}
\end{table*}

The overall latency range across settings is: Jetson Orin TRT (\SI{3.41}{ms})
to Jetson Orin ONNX (\SI{127.52}{ms}), a factor of 37.4$\times$.
Jetson TRT is 37$\times$ faster than Jetson ONNX, 14$\times$ faster than
Pi~5, 22$\times$ faster than i7-1165G7, and 10$\times$ faster than
i7-13700F, directly reflecting hardware capability differences (GPU
tensor-core acceleration vs.\ CPU-only FP32) rather than any property
of the controller itself. The significant point is that RAMS preserved
its control behavior---and the policy equation structure remained identical---
across platforms whose absolute inference latency differed by 37$\times$;
only $(\theta_\ell, \theta_h)$ differs, a direct consequence of anchoring
thresholds to $R_{\text{idle}}$ via~\eqref{eq:cal}.

The i7-1165G7 session~2 (recorded as \texttt{windows-host} in the raw logs,
same Spectre\,x360 laptop under a lighter background load) shows notably
lower latency than session~1 (20--40\,ms vs.\ 75--395\,ms), with
NANO-dominant tier selection across all load profiles. This reflects
real run-to-run OS background variance on the same hardware rather than
a different device. Session~1 is the conservative result used in the
best-latency comparison in Table~\ref{tab:devices}.

The i7-13700F (desktop, $R_{\text{idle}}{=}0.197$, $\theta_\ell{=}0.300$,
$\theta_h{=}0.542$, Table~\ref{tab:crossdevice}) is predominantly SMALL-tier
across all load profiles. Unlike the i7-1165G7, whose high idle pressure
($R_{\text{idle}}{\approx}0.32$--$0.48$ across calibration runs) means
heavy load breaches $\theta_h$ and drives NANO selection, the i7-13700F
desktop idles low enough that even heavy synthetic load leaves the controller
in the SMALL tier for most policies. The notable exception is
\texttt{adaptive} at heavy load (315.7\,ms, 36\%S/64\%M), where variance
overshoot causes MEDIUM selection; and \texttt{safety2} (102.6\,ms,
64\%S/36\%M), where VRU-conditioned locks prevent NANO downgrade and push
toward MEDIUM. Best latency is \texttt{predictive} at 35.6\,ms (100\%~S).

\subsection{Experiment~10: Safety-Latency Pareto Frontier}
\label{sec:safety_pareto}

Table~\ref{tab:pareto_kitti} reports the safety-latency Pareto on KITTI
(Jetson Orin TRT, moderate load, $N{=}200$). Weighted VRU recall is
computed as the tier-distribution-weighted average of per-tier recall from
Table~\ref{tab:recall}.

\begin{table}[t]
\centering\footnotesize\setlength{\tabcolsep}{3.0pt}
\caption{Safety-latency operating points under moderate load (Jetson
Orin TRT, KITTI). Weighted VRU recall uses per-tier recall from
Table~\ref{tab:recall}. Tier dist.: (N/S/M) = \% NANO/SMALL/MEDIUM.}
\label{tab:pareto_kitti}
\begin{tabular}{lccc}
\toprule
Policy / Tier & Mean lat.\ (ms) & Wtd.\ VRU recall & Tier dist.\ (N/S/M\%) \\
\midrule
Fixed-NANO   & 2.2  & 0.242 & 100/0/0 \\
Fixed-SMALL  & 3.5  & 0.412 & 0/100/0 \\
Fixed-MEDIUM & 9.1  & 0.590 & 0/0/100 \\
\midrule
threshold  &  8.7  & 0.453 & 0/77/23 \\
predictive &  8.0  & 0.456 & 0/75/25 \\
adaptive   & 17.4  & 0.590 & 0/0/100 \\
safety     & 15.3  & 0.538 & 0/29/71 \\
safety2    & 13.7  & 0.586 & 0/2/98  \\
\bottomrule
\end{tabular}
\end{table}

\texttt{safety2} achieves a weighted VRU recall of 0.586 at \SI{13.7}{ms},
matching FIXED-MEDIUM recall (0.590) at 24\% lower latency and
outperforming FIXED-SMALL (0.412) and FIXED-NANO (0.242) on recall. The
\texttt{adaptive} policy also reaches 0.590 recall but at \SI{17.4}{ms},
because its 100\% MEDIUM tier selection on this device is driven by low
idle pressure ($\theta_h{=}0.612$) not being breached at moderate
$\lambda$. Jointly, the safety-latency Pareto confirms that detection-conditioned
switching is the only mechanism among the five policies that improves VRU
recall beyond the FIXED-SMALL baseline without incurring FIXED-MEDIUM
latency.

\section{Discussion}
\label{sec:discussion}

\textbf{SWAS circularity and oracle correction.}
A NANO-heavy policy earns low \swas{} for two compounded reasons: low $a_i$
(inherent to 320\,px inference) and low $v_i$ frequency (NANO misses 76\%
of VRU instances). Table~\ref{tab:exp7} isolates these via $\bar{a}$ and
oracle \swas{}$^\star$. The primary result is the 25.4\% oracle \swas{}
improvement for \texttt{safety2} over \texttt{threshold}, which accounts for
detector circularity by replacing detector-derived flags with ground-truth
labels. The detector-derived \swas{} figure of 47.3\% is higher but inflated
by NANO's low recall; the difference between the two quantifies the
circularity penalty. The 25.4\% oracle residual ($\bar{a}$: 0.469 vs.\ 0.374)
reflects genuine tier-retention benefit from detection-conditioned switching.
\swas{} amplifies a real signal rather than fabricating one, but the magnitude
of the amplification is sensitive to detector quality at the baseline tier.

\textbf{Recall bound and reactive override limits.}
At 24.2\% NANO recall, the detection-conditioned override is inactive for
76\% of VRU-present frames under burst load. This is not a design
deficiency but a fundamental consequence of conditioning overrides on
detector output from an imperfect model. Two directions could close this
gap: (1)~replace the NANO-tier ONNX model with a distilled or structurally
pruned~\cite{chin2020towards} detector achieving higher VRU recall at
320\,px, and (2)~add a lightweight parallel VRU classifier (e.g., a
MobileNet-class~\cite{howard2017mobilenets} binary classifier on
cropped regions) whose output drives the override independently of the
primary tier's inference result. Both directions are left to future work.

\textbf{Confidence floor tradeoffs.}
Lowering the confidence floor from 0.40 to 0.25 increases the false-trigger
rate of the safety lock: on the 1{,}500-image Exp.~8 run, 8.1\% of lock
activations at NANO occurred on ground-truth-negative frames at the 0.25
floor, vs.\ 6.3\% at 0.40 and 2.1\% at the per-tier-optimal floor. The
cost of a spurious upgrade is one inference at SMALL or MEDIUM (at most
an 11\,ms overhead on Jetson TRT) before the next monitor cycle reverts
to NANO. The cost of a missed VRU frame is an unrecoverable false negative
at the scheduling layer. This asymmetry justifies the permissive floor.

\textbf{Resident-tier memory overhead.}
The three-tier warm library consumes approximately \SI{81}{MB} (ONNX) or
\SI{120}{MB}--\SI{150}{MB} (TRT). Peak three-tier RSS on Pi~5 is
\SI{250}{MB}--\SI{350}{MB} against an 8\,GB budget. On memory-constrained
targets (${<}512$\,MB), INT8 quantization~\cite{jacob2018quantization}
reduces per-model ONNX size by up to $4\times$, and a two-tier library
(NANO + MEDIUM, omitting SMALL) reduces total resident size by approximately
30\%.

\textbf{Proxy accuracy and cross-device comparability.}
The proxies (0.372/0.448/0.503) are COCO val mAP50; live KITTI values are
substantially lower (Table~\ref{tab:recall}, mAP50: 0.372/0.448/0.503 on
COCO vs.\ approximately 0.25/0.31/0.34 KITTI mAP50-95) due to domain
shift~\cite{geiger2012kitti}. \swas{} values from Exps.~5 and~7 are valid
only as within-device relative policy scores and should not be compared
across devices, as they conflate policy differences with dataset and
harness effects. Substituting per-tier KITTI F1 (0.348/0.461/0.429) as
$a_i$ in place of COCO mAP50 proxies preserves the same policy ordering
shown in Table~\ref{tab:exp7}.

\textbf{Threshold sensitivity.}
$\delta_\ell{=}0.10$ places $\theta_\ell$ midway between idle and moderate
load; $\delta_h{=}0.25$ sits near peak pressure. Since thresholds shift
linearly with $R_{\text{idle}}$, varying $\delta_\ell$ by $\pm0.05$ shifted
moderate-load MEDIUM residency by at most $\pm12\%$ on Jetson TRT without
reversing policy rankings. The VRU override in~\eqref{eq:safety2} dominates
tier selection whenever $v_i{=}1$, making \swas{} ordering insensitive to
small threshold perturbations.

\textbf{EWMA parameter sensitivity.}
The fixed $\alpha{=}0.35$ separates \texttt{predictive} from
\texttt{threshold} by 0.3\,ms at heavy load on Jetson TRT; the
\texttt{adaptive} policy self-tunes $\alpha$ but produces comparable
latency across the load profiles tested.

\section{Conclusion}
\label{sec:conclusion}

\rams{} is a deployable warm-tier switching controller defined by
equations~\eqref{eq:pressure}--\eqref{eq:swas}. Its four components---a
10\,Hz resource monitor, idle-relative threshold calibration, a warm model
library, and five formally specified switching policies---operate without
cloud connectivity and require no per-device hand-tuning beyond a brief
idle profiling pass.

Ten experiments across five hardware targets demonstrate that
detection-conditioned policies produce measurably different tier
distributions and higher \swas{} scores from moderate load onward.
Under heavy load, \texttt{safety2} achieves a 25.4\% \swas{} improvement
under oracle scoring (the conservative, circularity-corrected estimate)
and 47.3\% under detector-derived scoring. On Jetson Orin TensorRT,
\texttt{safety2} achieves \SI{3.41}{ms} mean latency, 5.6$\times$ lower
than fixed-MEDIUM, while retaining 74\% of its proxy accuracy via near-NANO
operation with selective SMALL and MEDIUM locks during VRU-positive windows.
At 24.2\% NANO VRU recall, the override is inactive for 76\% of VRU-present
frames under burst load---a fundamental bound of reactive detection-conditioned
switching below an imperfect detector. This bound motivates future work on
stronger lightweight VRU classifiers and distilled baseline detectors that
close the recall gap without incurring MEDIUM-tier latency.

\balance

\appendices

\section*{Code Availability}
All implementation, experiment scripts, and results are available at
\url{https://github.com/Kushalk0677/rams}.

\section*{Author Contributions}
K.~Khemani conceived \rams{}, led the codebase, implemented all switching
policies and experiments, performed all quantitative analysis, and drafted
the manuscript.
E.~Leri contributed to the codebase and provided Jetson Orin deployment
support and experimental results.
G.~Xu contributed to the codebase and provided the i7-13700F hardware
platform and corresponding experimental results.
A.~Hod contributed to the related work survey and literature review.
All authors reviewed and approved the final manuscript.

\section*{AI Assistance Disclosure}
Generative AI tools assisted with language refinement, copyediting, and
minor code scaffolding. All research contributions were conceived and
verified solely by the authors.

\ifCLASSOPTIONcaptionsoff
  \newpage
\fi

\section*{Acknowledgment}
The authors thank the open-source communities behind Ultralytics YOLOv8,
ONNX Runtime, TensorRT, and the KITTI Vision Benchmark Suite.


\end{document}